\newtheorem{lemma}{Lemma}
\newtheorem{proposition}{Proposition}
\newtheorem{remark}{Remark}
\newcommand{\tensor}[1]{{\bm #1}}
\newcommand{\CP}{{CP}\xspace}
\newcommand{\complex}{{ComplEx}\xspace}
\newcommand{\distmult}{{DistMult}\xspace}
\newcommand{\conve}{{ConvE}\xspace}
\newcommand{\transe}{{TransE}\xspace}
\newcommand{\fb}{{FB15K}\xspace}
\newcommand{\fbd}{{FB15K-237}\xspace}
\newcommand{\wn}{{WN18}\xspace}
\newcommand{\wnrr}{{WN18RR}\xspace}
\newcommand{\yago}{{YAGO3-10}\xspace}
\newcommand{\tens}[1]{\tensor{#1}}
\newcommand{\tensi}[2]{\tensor{#1}_{#2}}
\newcommand{\rkv}{r}
\newcommand{\rk}{R}
\newcommand{\dv}{d}
\newcommand{\nents}{N}
\newcommand{\nrels}{P}
\newcommand{\unkn}{X}
\newcommand{\pred}{\hat{\unkn}}
\newcommand{\tgt}{Y}
\newcommand{\indices}{{\cal S}}
\newcommand{\indecesz}{{|\indices|}}
\newcommand{\loss}{\ell}
\newcommand{\npnorm}[1]{\norm{#1}_{*,p}}
\newcommand{\Co}{\mathbb{C}}
\newcommand{\facv}[2]{u^{(#1)}_{#2}}
\newcommand{\weightv}[1]{w_{\indices}^{(#1)}}
\newcommand{\weightvi}[1]{w_{\indices, i}^{(#1)}}
\newcommand{\nv}[1]{{n_{#1}}}
\newcommand{\allfaccp}{\mathcal{U}_R(\tens{\unkn})}
\newcommand{\nallfaccp}[1]{\overline{\mathcal{U}}^{#1}_R(\tens{\unkn})}
\newcommand{\opnorm}[1]{{\left\vert\kern-0.25ex\left\vert\kern-0.25ex\left\vert #1 
    \right\vert\kern-0.25ex\right\vert\kern-0.25ex\right\vert}}
    \DeclareMathOperator{\Tr}{Tr}
\icmltitlerunning{Canonical Tensor Decomposition for Knowledge Base Completion}
\begin{document}

\twocolumn[
\icmltitle{Canonical Tensor Decomposition for Knowledge Base Completion}


\begin{icmlauthorlist}
\icmlauthor{Timoth\'{e}e Lacroix}{fair,enpc}
\icmlauthor{Nicolas Usunier}{fair}
\icmlauthor{Guillaume Obozinski}{enpc}
\end{icmlauthorlist}

\icmlaffiliation{fair}{Facebook AI Research, Paris, France}
\icmlaffiliation{enpc}{Université Paris-Est, Equipe Imagine, 
LIGM (UMR8049) 
Ecole des Ponts ParisTech
Marne-la-Vallée, France}

\icmlcorrespondingauthor{Lacroix Timothee}{timothee.lax@gmail.com}

\icmlkeywords{Knowledge Base Completion, Tensor Norms}

\vskip 0.3in
]


\printAffiliationsAndNotice{}

\begin{abstract}
The problem of Knowledge Base Completion can be framed as a 3rd-order binary tensor completion problem. In this light, the Canonical Tensor Decomposition (\CP) \citep{hitchcock_expression_1927} seems like a natural solution; however, current implementations of \CP on standard Knowledge Base Completion benchmarks are lagging behind their competitors. In this work, we attempt to understand the limits of \CP for knowledge base completion. First, we motivate and test a novel regularizer, based on tensor nuclear $p$-norms. Then, we present a reformulation of the problem that makes it invariant to arbitrary choices in the inclusion of predicates or their reciprocals in the dataset. These two methods combined allow us to beat the current state of the art on several datasets with a \CP decomposition, and obtain even better results using the more advanced \complex model.
\end{abstract}

\section{Introduction}
In knowledge base completion, the learner is given triples (subject, predicate, object) of facts about the world, and has to infer new triples that are likely but not yet known to be true. This problem has attracted a lot of attention \citep{nickel_review_2016,nguyen_overview_2017} both as an example application of large-scale tensor factorization, and as a benchmark of learning representations of relational data. 

The standard completion task is link prediction, which consists in answering queries (subject, predicate, ?) or (?, predicate, object). In that context, the canonical decomposition of tensors (also called CANDECOMP/PARAFAC or \CP) \citep{hitchcock_expression_1927} is known to perform poorly compared to more specialized methods. For instance, \distmult \citep{yang_embedding_2014}, a particular case of \CP which shares the factors for the subject and object modes, was recently shown to have state-of-the-art results \citep{kadlec_knowledge_2017}. This result is surprising because \distmult learns a tensor that is symmetric in the subject and object modes, while the datasets contain mostly non-symmetric predicates.

The goal of this paper is to study whether and how \CP can perform as well as its competitors. To that end, we evaluate three possibilities. 

First, as \citet{kadlec_knowledge_2017} showed that performances for these tasks are sensitive to the loss function and optimization parameters, we re-evaluate \CP with a broader parameter search and a multiclass log-loss. 

Second, since the best performing approaches are less expressive than \CP, we evaluate whether regularization helps. On this subject, we show that the standard regularization used in knowledge base completion does not correspond to regularization with a tensor norm. We then propose to use tensor nuclear $p$-norms \citep{friedland_nuclear_2014}, with the goal of designing more principled regularizers. 

Third, we propose a different formulation of the objective, in which we model separately predicates and their inverse: for each predicate ${\rm pred}$, we create an inverse predicate ${\rm pred}^{-1}$ and create a triple $({\rm obj}, {\rm pred}^{-1}, {\rm sub})$ for each training triple $({\rm sub}, {\rm pred}, {\rm obj})$. At test time, queries of the form $(?, {\rm pred}, {\rm obj})$ are answered as $({\rm obj}, {\rm pred}^{-1}, ?)$. Similar formulations were previously used by \citet{shen2016implicit} and \citet{joulin2017fast}, but for different models for which there was no clear alternative, so the impact of this reformulation has never been evaluated.

To assess whether the results we obtain are specific to \CP, we also carry on the same experiments with a state-of-the-art model, \complex \citep{trouillon_complex_2016}. \complex has the same expressivity as \CP in the sense that it can represent any tensor, but it implements a specific form of parameter sharing. We perform all our experiments on $5$ common benchmark datasets of link prediction in knowledge bases. 

Our results first confirm that within a reasonable time budget, the performance of both \CP and \complex are highly dependent on optimization parameters. With systematic parameter searches, we obtain better results for \complex than what was previously reported, confirming its status as a state-of-the-art model on all datasets. For \CP, the results are still way below its competitors. 

Learning and predicting with the inverse predicates, however, changes the picture entirely. First, with both \CP and \complex, we obtain significant gains in performance on all the datasets. More precisely, we obtain state-of-the-art results with \CP, matching those of \complex. For instance, on the benchmark dataset \fb \citep{bordes_translating_2013}, the mean reciprocal rank of vanilla \CP and vanilla \complex are $0.40$ and $0.80$ respectively, and it grows to $0.86$ for both approaches when modeling the inverse predicates. 

Finally, the new regularizer we propose based on the nuclear $3$-norm, does not dramatically help \CP, which leads us to believe that a careful choice of regularization is not crucial for these \CP models. Yet, for both \CP and \complex with inverse predicates, it yields small but significant improvements on the more difficult datasets.

\section{Tensor Factorization of Knowledge Bases}
\begin{figure*}
\centering
\includegraphics[trim={2cm 11.75cm 2cm 11.4cm}, clip=true, width=0.98\textwidth]{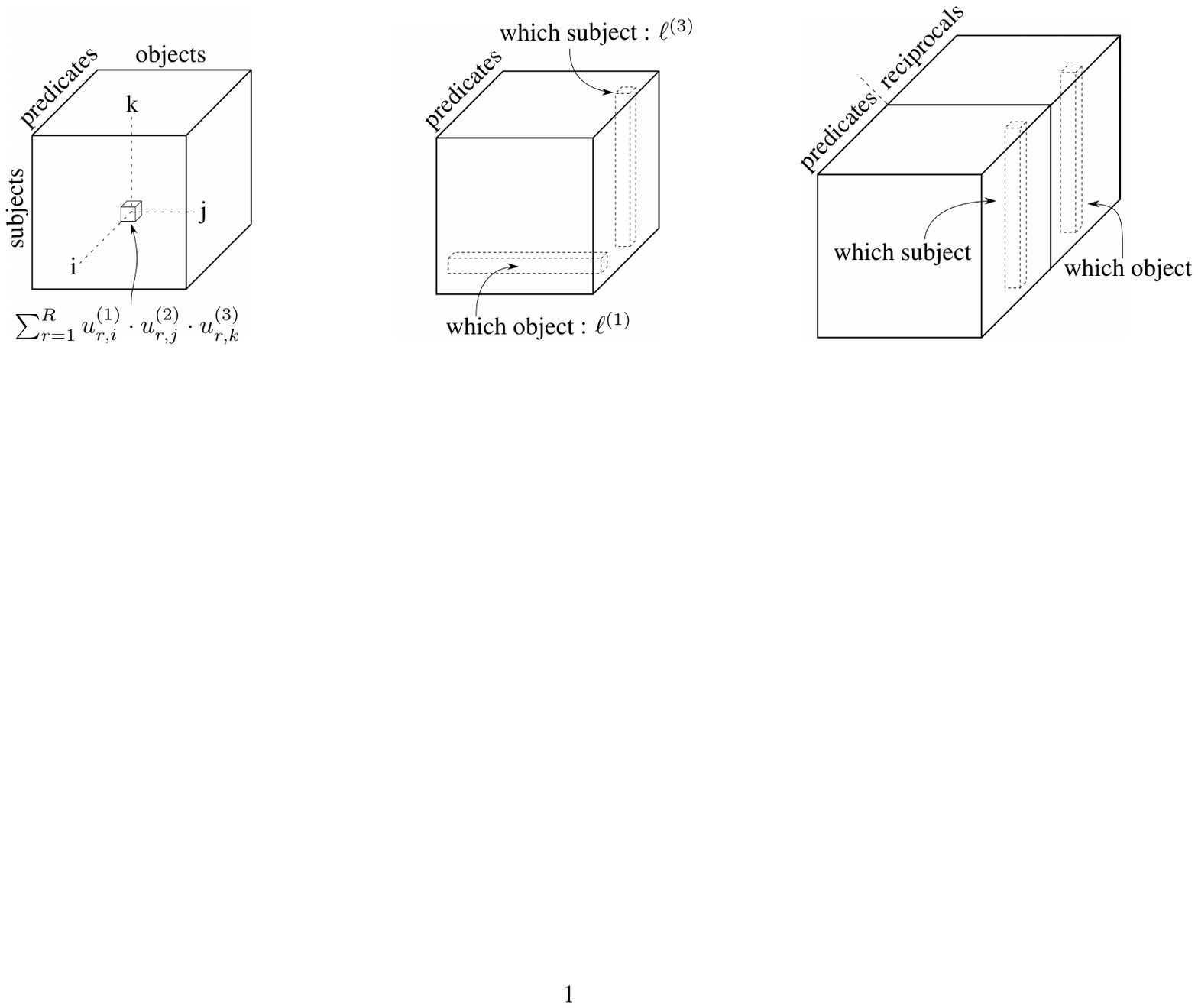}
\caption{(a) On the left, the link between the score of a triple (i,j,k) and the tensor estimated via \CP. (b) In the middle, the two type of fiber losses that we will consider. (c) On the right, our semantically invariant reformulation, the first-mode fibers become third-mode fibers of the reciprocal half of the tensor.}
\label{fig:tensors}
\end{figure*}
We describe in this section the formal framework we consider for knowledge base completion and more generally link prediction in relational data, the learning criteria, as well as the approaches that we will discuss.

\subsection{Link Prediction in Relational Data}
We consider relational data that comes in the form of triples (subject, predicate, object), where the subject and the object are from the same set of entities. In knowledge bases, these triples represent facts about entities of the world, such as $(W\!ashington, capital\_of, U\!S\!A)$. A training set $\indices$ contains triples of indices $\indices = \xSet{(i_1, j_1, k_1), ..., (i_\indecesz, j_\indecesz, k_\indecesz)}$ that represent predicates that are known to hold. The validation and test sets contain queries of the form $(?, j, k)$ and $(i, j, ?)$, created from triples $(i,j,k)$ that are known to hold but held-out from the training set. To give orders of magnitude, the largest datasets we experiment on, \fb and \yago, contain respectively $15k/1.3k$ and $123k/37$ entities/predicates.

\subsection{Tensor Decomposition for Link Prediction}

Relational data can be represented as a $\xSet{0,1}$-valued third order tensor $\tens{\tgt} \in \xSet{0,1}^{\nents\times\nrels\times\nents}$, where $\nents$ is the total number of entities and $\nrels$ the number of predicates, with $\tensi{\tgt}{i,j,k} = 1$ if the relation $(i,j,k)$ is known. In the rest of the paper, the three modes will be called the subject mode, the predicate mode and the object mode respectively. Tensor factorization algorithms can thus be used to infer a predicted tensor $\tens{\pred}\in\Re^{\nents\times\nrels\times\nents}$ that approximates $\tens{\tgt}$ in a sense that we describe in the next subsection. Validation/test queries $(?, j, k)$ are answered by ordering entities~$i'$ by decreasing values of $\tensi{\pred}{i',j,k}$, whereas queries $(i,j,?)$ are answered by ordering entities~$k'$ by decreasing values of $\tensi{\pred}{i,j,k'}$.

Several approaches have considered link prediction as a low-rank tensor decomposition problem. These models then differ only by structural constraints on the learned tensor. Three models of interest are:

\paragraph{\CP.} The canonical decomposition of tensors, also called CANDECOM/PARAFAC \citep{hitchcock_expression_1927}, represents a tensor $\tens{\unkn}\in\Re^{\nents_1\times\nents_2\times\nents_3}$ as a sum of $R$ rank one tensors $u^{(1)}_r \otimes u^{(2)}_r \otimes u^{(3)}_r$ (with $\otimes$ the tensor product) where $r\in\xSet{1,...,R}$, and $u^{(m)}_r\in\Re^{\nents_m}$:
\begin{equation*}
\tens{\unkn} = \sum_{r=1}^R u^{(1)}_r \otimes u^{(2)}_r \otimes u^{(3)}_r\,.
\end{equation*}
A representation of this decomposition, and the score of a specific triple is given in Figure~\ref{fig:tensors}~(a).
Given $\tens{\unkn}$, the smallest $R$ for which this decomposition holds is called the canonical rank of $\tens{\unkn}$. 

\paragraph{\distmult.} In the more specific context of link prediction, it has been suggested in \citet{bordes_learning_2011,nickel_three-way_2011} that since both subject and object mode represent the same entities, they should have the same factors. \distmult \citep{yang_embedding_2014} is a version of \CP with this additional constraint. It represents a tensor $\tens{\unkn}\in\Re^{\nents\times\nrels\times\nents}$ as a sum of rank-$1$ tensors $u^{(1)}_r \otimes u^{(2)}_r \otimes u^{(1)}_r$:
\begin{equation*}
\tens{\unkn} = \sum_{r=1}^R u^{(1)}_r \otimes u^{(2)}_r \otimes u^{(1)}_r\,.
\end{equation*}
\paragraph{\complex.} By contrast with the first models that proposed to share the subject and object mode factors, \distmult yields a tensor that is symmetric in the object and subject modes. The assumption that the data tensor can be properly approximated by a symmetric tensor for Knowledge base completion is not satisfied in many practical cases (e.g., while $(W\!ashington, capital\_of, U\!S\!A)$ holds, $(U\!S\!A, capital\_of, W\!ashington)$ does not). \complex \citep{trouillon_complex_2016} proposes an alternative where the subject and object modes share the parameters of the factors, but are complex conjugate of each other. More precisely, this approach represents a real-valued tensor $\tens{\unkn}\in\Re^{\nents_1\times\nents_2\times\nents_3}$ as the real part of a sum of $R$ complex-valued rank one tensors $u^{(1)}_r \otimes u^{(2)}_r \otimes \overline{u}^{(1)}_r$ where $r\in\xSet{1,...,R}$, and $u^{(m)}_r\in\Co^{\nents_m}$
\begin{equation*}
\tens{\unkn} = {\rm Re}\big(\sum_{r=1}^R u^{(1)}_r \otimes u^{(2)}_r \otimes \overline{u}^{(1)}_r\big)\,,
\end{equation*}
where $\overline{u}^{(1)}_r$ is the complex conjugate of $u^{(1)}_r$. This decomposition can represent any real tensor \citep{trouillon_complex_2016}.

The good performances of \distmult on notoriously non-symmetric datasets such as \fb or \wn are surprising. First, let us note that for the symmetricity to become an issue, one would have to evaluate queries $(i, j, ?)$ while also trying to answer correctly to queries of the form $(?, j, i)$ for a non-symmetric predicate $j$. The ranking for these two queries would be identical, and thus, we can expect issues with relations such as $capital\_of$. In \fb, those type of problematic queries make up only $4\%$ of the test set and thus, have a small impact. On \wn however, they make up $60\%$ of the test set. We describe in appendix 8.1 a simple strategy for \distmult to have a high filtered MRR on the hierarchical predicates of \wn despite its symmetricity assumption.

\subsection{Training}
Previous work suggested ranking losses \citep{bordes_translating_2013}, binary logistic regression \citep{trouillon_complex_2016} or sampled multiclass log-loss \citep{kadlec_knowledge_2017}. Motivated by the solid results in \citet{joulin2017fast}, our own experimental results, and with a satisfactory speed of about two minutes per epoch on \fb, we decided to use the full multiclass log-loss. 

Given a training triple $(i,j,k)$ and a predicted tensor $\tens{\unkn}$, the instantaneous multi-class log-loss $\loss_{i,j,k}(\tens{\unkn})$ is
\begin{align}
\label{eq:fiberloss}
\loss_{i,j,k}(\tens{\unkn}) & = \loss^{(1)}_{i,j,k}(\tens{\unkn}) + \loss^{(3)}_{i,j,k}(\tens{\unkn}) \\
\loss^{(1)}_{i,j,k}(\tens{\unkn}) &  =-\tensi{\unkn}{i,j,k} + \log \big ( \sum_{k'\!} \exp (\tensi{\unkn}{i,j,k'\!}) \big )\\
\loss^{(3)}_{i,j,k}(\tens{\unkn}) & = -\tensi{\unkn}{i,j,k} + \log \big ( \sum_{i'\!} \exp (\tensi{\unkn}{i',j,k\!}) \big )\,.
\end{align}
These two partial losses are represented in Figure~\ref{fig:tensors}~(b).
For \CP, the final tensor is computed by finding a minimizer of a regularized empirical risk formulation, where the factors $\facv{\dv}{\rkv}$ are weighted in a data-dependent manner by $\weightv{\dv}$, which we describe below: 
\begin{align}
\label{eq:mainprob}
\min_{(\facv{\dv}{\rkv})_{
\substack{\scriptscriptstyle \dv=1..3\\\scriptscriptstyle\rkv=1..\rk}}}
&\sum_{(i,j,k)\in\indices}
\loss_{i,j,k}\Big(\sum\limits_{\rkv=1}^\rk \!\facv{1}{\rkv} \otimes \facv{2}{\rkv}\otimes\facv{3}{\rkv}\Big) \nonumber \\
&+\lambda\sum_{\rkv=1}^\rk \sum_{\dv=1}^3 \norm{\weightv{\dv}\odot\facv{\dv}{\rkv}}_2^2\,, 
\end{align}
where $\odot$ is the entry-wise multiplication of vectors. 
For \distmult and \complex, the learning objective is similar, up to the appropriate parameter sharing and computation of the tensor.

As discussed in Section \ref{sec:matNorm}, the weights $\weightv{\dv}$ may improve performances when some rows/columns are sampled more than others. They appear naturally in optimization with stochastic gradient descent when the regularizer is applied only to the parameters that are involved in the computation of the instantaneous loss. For instance, in the case of the logistic loss with negative sampling used by \citet{trouillon_complex_2016}, denoting by $q^{\dv}_{i}$ the marginal probability (over $\indices$) that index $i$ appears in mode $\dv$ of a data triple, these weights are $\weightvi{\dv} = \sqrt{q^{\dv}_{i}+\alpha}$ for some $\alpha>0$ that depends on the negative sampling scheme.

We focus on redefining the loss \eqref{eq:fiberloss} and the regularizer \eqref{eq:mainprob}.

\section{Related Work}
\label{sec:rel_work}
We discuss here in more details the work that has been done on link prediction in relational data and on regularizers for tensor completion.

\subsection{Link Prediction in Relational Data}
There has been extensive research on link prediction in relational data, especially in knowledge bases, and we review here only the prior work that is most relevant to this paper. While some approaches explicitly use the graph structure during inference \citep{lao_random_2011}, we focus here on representation learning and tensor factorization methods, which are the state-of-the-art on the benchmark datasets we use. We also restrict the discussion to approaches that only use relational information, even though some approaches have been proposed to leverage additional types \citep{krompas_s_type-constrained_2015,ma2017transt} or external word embeddings \citep{toutanova_observed_2015}.

We can divide the first type of approaches into two broad categories. First, two-way approaches score a triple $(i,j,k)$ depending only on bigram interaction terms of the form subject-object, subject-predicate, and predicate-object. Even though they are tensor approximation algorithms of limited expressivity, two-way models based on translations \transe, or on bag-of-word representations \citep{joulin2017fast} have proved competitive on many benchmarks. Yet, methods using three-way multiplicative interactions, as described in the previous section, show the strongest performances \citep{bordes_learning_2011,garcia-duran_combining_2015,nickel_holographic_2015,trouillon_complex_2016}. Compared to general-purpose tensor factorization methods such as \CP, a common feature of these approaches is to share parameters between objects and subjects modes \citep{nickel_three-way_2011}, an idea that has been widely accepted except for the two-way model of \citet{joulin2017fast}. \distmult \citep{yang_embedding_2014} is the extreme case of this parameter sharing, in which the predicted tensor is symmetric in the subject and object modes. 

\subsection{Regularization for Matrix Completion}
\label{sec:matNorm}
Norm-based regularization has been extensively studied in the context
of matrix completion. The trace norm (or nuclear norm) has been
proposed as a convex relaxation of the rank
\citep{srebro_maximum-margin_2005} for matrix completion in the
setting of rating prediction, with strong theoretical guarantees
\citep{candes_exact_2009}. While efficient algorithms to solve the
convex problems have been proposed 
\citep[see e.g.][]{cai_singular_2010,jaggi_simple_2010}, the practice is still to
use the matrix equivalent of the nonconvex formulation
\eqref{eq:mainprob}. For the trace norm (nuclear $2$-norm), in the
matrix case, the regularizer simply becomes the squared $2$-norm of the factors and lends itself to alternating methods or SGD optimization
\citep{rennie_fast_2005,koren_matrix_2009}. When the samples are not
taken uniformly at random from a matrix, some other norms are
preferable to the usual nuclear norm. The weighted trace norm
reweights elements of the factors based on the marginal rows and columns sampling probabilities, which can improve sample complexity bounds when sampling is non-uniform  \citep{foygel_learning_2011,negahban_restricted_2012}. Direct SGD
implementations on the nonconvex formulation implicitly take this reweighting rule into account and were used by the winners of the Netflix challenge~\citep[see][Section 5]{srebro_collaborative_2010}.

\subsection{Tensor Completion and Decompositions}
There is a large body of literature on low-rank tensor decompositions
\citep[see][for a comprehensive review]{kolda_tensor_2009}. Closely
related to our work is the canonical decomposition of tensor (also
called CANDECOMP/PARAFAC or \CP) \citep{hitchcock_expression_1927},
which solves a problem similar to \eqref{eq:nonconvex} without the
regularization (i.e., $\lambda=0$), and usually the square
loss.

Several norm-based regularizations for tensors have been proposed. Some are based on unfolding a tensor along each of its modes to obtain matricizations, and either regularize by the sum of trace norms of the matricizations~\citep{tomioka_estimation_2010} or write the original tensor
as a sum of tensors $T_k$, regularizing their respective $k$th matricizations with the trace norm ~\citep{wimalawarne_multitask_2014}. However, in the large-scale setting, even rank-1 approximations of matricizations involve too many parameters to be tractable.

Recently, the tensor trace norm (nuclear $2$-norm) was proposed as a regularizer for tensor completion \citet{yuan_tensor_2016}, and an algorithm based on the generalized conditional gradient has been developed by \citet{cheng_scalable_2016}. This algorithm requires, in an inner loop, to compute a (constrained) rank-1 tensor that has largest dot-product with the gradient of the data-fitting term (gradient w.r.t.\ the tensor argument). This algorithm is efficient in our setup only with the square error loss (instead of the multiclass log-loss), because the gradient is then a low-rank + sparse tensor when the argument is low-rank. However, on large-scale knowledge bases, the state of the art is to use a binary log-loss or a multiclass log-loss
\citep{trouillon_complex_2016,kadlec_knowledge_2017}; in that case, the gradient is not adequately structured, thereby causing the approach of \cite{cheng_scalable_2016} to be too computationally costly.

\section{Nuclear $p$-Norm Regularization}
As discussed in Section~\ref{sec:rel_work}, norm-based regularizers have proved useful for matrices. We aim to reproduce these successes with tensor norms. We use the nuclear $p$-norms defined by \citet{friedland_nuclear_2014}. As shown in Equation~\eqref{eq:mainprob}, the community has favored so far a regularizer based on the square Frobenius norms of the factors \citep{yang_embedding_2014, trouillon_complex_2016}. We  first show that the unweighted version of this regularizer is not a tensor norm. Then, we propose4 a variational form of the nuclear $3$-norm to replace the usual regularization at no additional computational cost when used with SGD. Finally, we discuss a weighting scheme analogous to the weighted trace-norm proposed in \citet{srebro_collaborative_2010}.

\subsection{From Matrix Trace-Norm to Tensor Nuclear Norms}
To simplify notation, let us introduce the set of \CP decompositions of a tensor $\tens{\unkn}$ of rank at most $R$:
\begin{align}
\allfaccp=\Big \{(\facv{\dv}{\rkv})_{\substack{\dv=1..3\\\rkv=1..\rk}}~\Big|~ &\tens{\unkn} = \sum_{\rkv=1}^\rk \!\facv{1}{\rkv} \otimes \facv{2}{\rkv}\otimes\facv{3}{\rkv},\\
&\forall r, d, \: \facv{\dv}{\rkv} \in\Re^{N_d}\Big \}.
\end{align}
We will study the family of regularizers:
\begin{equation}
\label{eq:omegas}
\Omega^{\alpha}_p(u) = \frac{1}{3}\sum_{r=1}^R\sum_{d=1}^3\norm{u_r^{(d)}}_p^{\alpha}.
\end{equation}
Note that with $p=\alpha=2$, we recover the familiar squared Frobenius norm regularizer used in \eqref{eq:mainprob}. Similar to showing that the squared Frobenius norm is a \emph{variational form} of the trace norm on matrices (i.e., its minimizers realize the trace norm, $\inf_{M=UV^T}\frac{1}{2}(\lVert U\lVert_F^2+\lVert V\lVert_F^2) = \lVert M\lVert_*$), we start with a technical lemma that links our regularizer with a function on the spectrum of our decompositions. 
\begin{lemma}
\begin{equation}
\label{eq:lemma_tech}
\min_{u\in \allfaccp}\frac{1}{3}\sum_{r=1}^R\sum_{d=1}^3\norm{u_r^{(d)}}_p^{\alpha} = \min_{u \in \allfaccp}\sum_{r=1}^R\prod_{d=1}^3\norm{u_r^{(d)}}_p^{\alpha/3}.
\end{equation}
Moreover, the minimizers of the left-hand side satisfy: 
\begin{equation}
\norm{u_r^{(d)}}_p = \sqrt[3]{\prod_{d'=1}^3\norm{u_r^{(d')}}_p}.
\end{equation}
\end{lemma}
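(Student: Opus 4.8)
The plan is to derive both assertions from a single rescaling argument combined with the arithmetic--geometric-mean (AM--GM) inequality, using the scale invariance of each rank-one term: $\facv{1}{\rkv}\otimes\facv{2}{\rkv}\otimes\facv{3}{\rkv}$ is unchanged under the diagonal rescaling $u_r^{(d)}\mapsto\lambda_d u_r^{(d)}$ as soon as $\lambda_1\lambda_2\lambda_3=1$, so the product $\prod_{d}\norm{u_r^{(d)}}_p$ is a rescaling invariant of the term while the sum $\sum_{d}\norm{u_r^{(d)}}_p^{\alpha}$ is not. The equality of the two minima will then be a two-sided inequality, and the characterization of the minimizers will fall out of the equality case of AM--GM.

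\textbf{Lower bound.} First I would fix any $u\in\allfaccp$ and apply AM--GM to the three nonnegative numbers $\norm{u_r^{(d)}}_p^{\alpha}$, $d=1,2,3$, for each $r$, obtaining $\tfrac13\sum_{d}\norm{u_r^{(d)}}_p^{\alpha}\geq\prod_{d}\norm{u_r^{(d)}}_p^{\alpha/3}$, with equality iff the three factor norms coincide. Summing over $r$ and taking the infimum over $u$ shows that the left-hand side of \eqref{eq:lemma_tech} is at least the right-hand side. \textbf{Upper bound.} Next, given $u\in\allfaccp$, I would build a competitor $\tilde u$ by equalizing the norms inside each term: for a term $r$ with all factors nonzero, set $g_r=\sqrt[3]{\prod_{d'}\norm{u_r^{(d')}}_p}$ and $\tilde u_r^{(d)}=(g_r/\norm{u_r^{(d)}}_p)\,u_r^{(d)}$ (a term with a zero factor is the zero tensor and is simply dropped). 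Since the three scaling factors multiply to $1$, the rank-one tensor is unchanged, hence $\tilde u\in\allfaccp$; and because $\norm{\tilde u_r^{(d)}}_p=g_r$ for all $d$, the left-hand objective evaluated at $\tilde u$ equals $\sum_r g_r^{\alpha}=\sum_r\prod_d\norm{u_r^{(d)}}_p^{\alpha/3}$, i.e.\ the right-hand objective at $u$. Minimizing over $u$ gives the reverse inequality, so the two minima agree.

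\textbf{Minimizers.} For the second claim, let $u^\star$ minimize the left-hand side and let $L$ be the common minimal value. Then $L=\tfrac13\sum_{r,d}\norm{(u^\star)_r^{(d)}}_p^{\alpha}\geq\sum_r\prod_d\norm{(u^\star)_r^{(d)}}_p^{\alpha/3}\geq L$, where the first inequality is the termwise AM--GM bound and the second holds because $u^\star$ is feasible for the right-hand problem. Consequently AM--GM is an equality for every $r$, which is exactly $\norm{(u^\star)_r^{(d)}}_p=\sqrt[3]{\prod_{d'}\norm{(u^\star)_r^{(d')}}_p}$ for all $r,d$.

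The argument is essentially routine; the one spot that needs care — and the nearest thing to an obstacle — is the degenerate case where some $\norm{u_r^{(d)}}_p$ vanishes, which makes the rescaling ill-defined. This is harmless: such a term is the zero tensor and can be removed from the decomposition without affecting either objective, so one may assume all factors have positive $p$-norm. I would also note in passing that the proof only uses $\alpha>0$ and the absolute homogeneity of $\norm{\cdot}_p$, so nothing here is special to $p=\alpha=2$.
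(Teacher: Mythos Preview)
Your proof is correct and follows essentially the same route as the paper: both exploit the invariance of each rank-one term under rescalings $u_r^{(d)}\mapsto c_r^{(d)}u_r^{(d)}$ with $\prod_d c_r^{(d)}=1$, and then optimize over these rescalings term by term. The only cosmetic difference is that the paper invokes ``constrained optimization techniques'' (i.e.\ Lagrange multipliers) to solve $\min_{\prod c_d=1}\tfrac13\sum_d (c_d a_d)^\alpha$, while you apply AM--GM directly; these are of course the same computation, and your version is arguably cleaner and more explicit (you also handle the degenerate zero-factor case, which the paper omits).
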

\begin{proof}
See Appendix~8.2.
\end{proof}
This Lemma motivates the introduction of the set of $p$-norm normalized tensor decompositions:
\begin{align}
\nallfaccp{p}=\Big \{&(\sigma_{\rkv}, (\tilde{u}_r))_{r=1..R}~\Big|~ \sigma_{\rkv} = \prod_{d=1}^3\norm{u^{(d)}_{\rkv}}_p,\nonumber \\
&\tilde{u}^{(d)}_r = \frac{u^{(d)}_r}{\norm{u^{(d)}_r}_p}, \forall r,d, u \in\allfaccp \Big \}.
\end{align}
Lemma~\ref{eq:lemma_tech}, shows that $\Omega_p^\alpha$ behaves as an $\ell_{\alpha/D}$ penalty over the CP \emph{spectrum} for tensors of order $D$. We recover the nuclear norm for matrices when $\alpha = p = 2$.

Using Lemma~\ref{eq:lemma_tech}, we have :
\begin{equation}
\label{eq:rescale}
\min_{u\in\allfaccp}\Omega_2^2(u) \leq \eta \iff \min_{(\sigma, \tilde{u}) \in \nallfaccp{2}}\lVert \sigma\lVert_{2/3} \leq \eta^{3/2}
\end{equation}

We show that the sub-level sets of the term on the right are not convex, which implies that $\Omega_2^2$ is not the variational form of a tensor norm, and hence, is not the tensor analog to the matrix trace norm.

\begin{proposition}
\label{prop:notanorm}
The function over third order-tensors of  $\Re^{\nents_1 \times \nents_2\times\nents_3}$ defined as
$$\opnorm{\tens{\unkn}} = \min\Big\{ \norm{\sigma}_{2/3}~\Big|~(\sigma, \tilde{u}) \in \nallfaccp{2}, \rk\in\Na\Big\}$$
is not convex.
\end{proposition}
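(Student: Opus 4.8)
The plan is to exploit that $\opnorm{\cdot}$ is positively homogeneous of degree one — rescaling the spectrum $\sigma\mapsto t\sigma$ turns a normalized decomposition of $\tens{\unkn}$ into one of $t\tens{\unkn}$ — so that convexity would be equivalent to subadditivity, and it suffices to produce two tensors with $\opnorm{\tfrac12(X+Y)}>\tfrac12\opnorm{X}+\tfrac12\opnorm{Y}$. I would take two orthonormal vectors $e_1,e_2$ (so $\nents_d\ge 2$), set $X=e_1\otimes e_1\otimes e_1$, $Y=e_2\otimes e_2\otimes e_2$, and $M=\tfrac12(X+Y)$. The first easy observation is that on rank-one tensors $\opnorm{\cdot}$ is just the Frobenius norm: the one-term decomposition gives the upper bound, and for any normalized decomposition $(\sigma,\tilde u)$ one has $\lVert\sigma\rVert_{2/3}\ge\lVert\sigma\rVert_1\ge\lVert a\otimes b\otimes c\rVert_{\rm F}$ (subadditivity of $t\mapsto t^{2/3}$, then the triangle inequality). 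Hence $\opnorm{X}=\opnorm{Y}=1$ and the right-hand side of the convexity inequality equals $1$; everything reduces to showing $\opnorm{M}>1$.

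To lower bound $\opnorm{M}$, I would fix an arbitrary normalized decomposition $M=\sum_{r=1}^R\sigma_r\,\hat a_r\otimes\hat b_r\otimes\hat c_r$ with unit-norm factors and $\sigma_r>0$, and set $S=\sum_r\sigma_r$, $m=\max_r\sigma_r$, attained at index $s$. The \emph{key step} is a lower bound on the mass carried outside the dominant term: since $M$ minus the $s$-th term equals $\sum_{r\ne s}\sigma_r\,\hat a_r\otimes\hat b_r\otimes\hat c_r$, and since $\sigma_s\,\hat a_s\otimes\hat b_s\otimes\hat c_s$ is a rank-$\le 1$ tensor, the quantity $\sum_{r\ne s}\sigma_r\ge\lVert M-\sigma_s\,\hat a_s\otimes\hat b_s\otimes\hat c_s\rVert_{\rm F}$ is at least the Frobenius distance from $M$ to the set of rank-$\le 1$ tensors. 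A one-line computation — $\lVert M\rVert_{\rm F}^2=\tfrac12$, $\langle M,x\otimes y\otimes z\rangle\le\tfrac12$ for unit $x,y,z$, then completing the square in $\lVert M-c\,u\otimes v\otimes w\rVert_{\rm F}^2$ — shows this distance equals $\tfrac12$, so $m\le S-\tfrac12$. Combining with $\sum_r\sigma_r^{2/3}=\sum_r\sigma_r\,\sigma_r^{-1/3}\ge m^{-1/3}S\ge S(S-\tfrac12)^{-1/3}$ and minimizing the last expression over $S>\tfrac12$ (minimum at $S=\tfrac34$) gives $\sum_r\sigma_r^{2/3}\ge\tfrac34\cdot 4^{1/3}$, hence $\lVert\sigma\rVert_{2/3}=\big(\sum_r\sigma_r^{2/3}\big)^{3/2}\ge(\tfrac34)^{3/2}\cdot 2=\tfrac{3\sqrt3}{4}>1$. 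Since the decomposition was arbitrary, $\opnorm{M}\ge\tfrac{3\sqrt3}{4}>1=\tfrac12\opnorm{X}+\tfrac12\opnorm{Y}$, contradicting convexity.

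The main obstacle is precisely the bound $m\le S-\tfrac12$: without it one cannot rule out decompositions that dump almost all the mass into a single rank-one term, which under an $\ell_{2/3}$ spectral penalty (rather than the $\ell_1$ of a genuine nuclear norm) would be artificially cheap and would drive $\lVert\sigma\rVert_{2/3}$ down toward $\lVert M\rVert_{\rm F}=1/\sqrt2<1$. The geometric fact that $M$ is genuinely of rank two, quantified through its strictly positive distance to the rank-one variety, is what closes this gap; everything else is elementary manipulation of $\ell_p$ quasi-norms for $p<1$, and indeed this is the same phenomenon that makes the Schatten-$p$ quasi-norms non-convex in the matrix case.
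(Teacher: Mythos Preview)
Your argument is correct. Both you and the paper build the same counterexample in spirit—two orthogonal rank-one tensors $X,Y$ with $\opnorm{X}=\opnorm{Y}=1$ and midpoint $M$ satisfying $\opnorm{M}>1$—but the routes to the strict inequality differ. The paper works in $\Re^{2\times 2\times 1}$, so the third factor is a scalar and the decomposition collapses to a matrix factorisation $A=\tfrac12 I_2=\sum_r \pm\sigma_r u_r^{(1)}(u_r^{(2)})^\top$; then $\Tr(A)\le\|\sigma\|_1$ by Cauchy--Schwarz gives $\|\sigma\|_1\ge 1$, and the strict inequality $\|\sigma\|_{2/3}>\|\sigma\|_1$ follows because $A$ has matrix rank $2$, forcing at least two nonzero $\sigma_r$. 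Your proof instead works with a genuinely three-dimensional tensor $M=\tfrac12(e_1^{\otimes 3}+e_2^{\otimes 3})$ and replaces the trace bound by the Frobenius distance of $M$ to the rank-one variety: this yields $S-m\ge\tfrac12$, and the subsequent optimisation in $S$ gives the quantitative lower bound $\opnorm{M}\ge\tfrac{3\sqrt3}{4}$.

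What each buys: the paper's argument is shorter and leans on matrix structure (the trivial third mode), but as written it tacitly uses that the minimum in the definition of $\opnorm{\cdot}$ is attained in order to turn ``$>$ for every decomposition'' into ``$>$ at the infimum''. Your argument is longer but more robust: it produces a uniform bound strictly above $1$ without needing attainment of the minimum, and it operates on a nondegenerate third-order tensor, so the mechanism (positive distance to the rank-one variety forcing spectral spread, which the $\ell_{2/3}$ penalty punishes) is visible in the tensor setting itself. A small caveat is that your construction needs $\nents_d\ge 2$ for all three modes, whereas the paper's covers the case $\nents_3=1$; this is cosmetic, since the matrix-slab case is the easier one.
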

\begin{proof}
See Appendix~8.2.
\end{proof}
\begin{remark}
\citet[Appendix 
D]{cheng_scalable_2016} already showed that regularizing with the square Frobenius norm of the factors is not related to the trace norm for tensors of order $3$ and above, but their observation is that the regularizer is not positively homogeneous, i.e., $\min_{u\in\alpha\allfaccp}\Omega_2^2(u) \neq |\alpha|\min_{u\in\allfaccp}\Omega_2^2(u)$. Our result in Proposition \ref{prop:notanorm} is stronger in that we show that this regularizer is not a norm even after the rescaling \eqref{eq:rescale} to make it homogeneous.
\end{remark}
The nuclear $p$-norm of $\tens{\unkn}\in\Re^{\nents_1\times\nents_2\times\nents_3}$ for $p\in[1, +\infty]$, is defined in \citet{friedland_nuclear_2014} as
\begin{align}
\npnorm{\tens{\unkn}} :=
\min
\Big\{
\norm{\sigma}_1 ~\Big|~ (\sigma, \tilde{u}) \in \nallfaccp{p}, \rk\in\Na
\Big\}\,. \label{eq:npnorm}
\end{align}
Given an estimated upper bound on the optimal $\rk$, the original problem \eqref{eq:mainprob} can then be re-written as a non-convex problem using the equivalence in Lemma~\ref{eq:lemma_tech}:
\begin{align}
\min_{(\facv{\dv}{\rkv})_{
\substack{\scriptscriptstyle \dv=1..3\\\scriptscriptstyle\rkv=1..\rk}}}&
\sum_{(i,j,k)\in\indices}
\loss_{i,j,k}\Big(\sum\limits_{\rkv=1}^\rk \!\facv{1}{\rkv} \otimes \facv{2}{\rkv}\otimes\facv{3}{\rkv}\Big) \\
+ &\frac{\lambda}{3}\sum_{\rkv=1}^\rk \sum_{\dv=1}^3 \norm{\facv{\dv}{\rkv}}_p^3\,. \label{eq:nonconvex}
\end{align}

This variational form suggests to use $p=3$, as a means to make the regularizer separable in each coefficients, given that then $\norm{\facv{\dv}{\rkv}}_p^3= \sum_{i=1}^{\nv{\dv}} \big|\facv{\dv}{\rkv, i}|^3$. 

\subsection{Weighted Nuclear $p$-Norm}

Similar to the weighted trace-norm for matrices, the weighted nuclear $3$-norm can be easily implemented by keeping the regularization terms corresponding to the sampled triplets only, as discussed in Section \ref{sec:matNorm}. This leads to a formulation of the form 
\begin{align}
\min_{(\facv{\dv}{\rkv})_{
\substack{\scriptscriptstyle \dv=1..3\\\scriptscriptstyle\rkv=1..\rk}}} \sum_{(i,j,k)\in\indices}& \Big [ \loss_{i,j,k}\big(\sum\limits_{\rkv=1}^\rk \!\facv{1}{\rkv} \!\otimes\! \facv{2}{\rkv}\!\otimes\!\facv{3}{\rkv}\big) \label{eq:weighted_nuclear_three} \\
+ &\frac{\lambda}{3}\sum_{r=1}^R\Big (\big|\facv{1}{\rkv, i}|^3 +  \big|\facv{2}{\rkv, j}|^3 +\big|\facv{3}{\rkv, k}|^3 \Big) \Big ]. \nonumber
\end{align}
For an example $(i, j, k)$, only the parameters involved in the computation of $\pred_{i,j,k}$ are regularized. The computational complexity is thus the same as the currently used weighted Frobenius norm regularizer. With $q^{(1)}$ (resp. $q^{(2)}$, $q^{(3)}$) the marginal probabilities of sampling a subject (resp. predicate, object), the weighting implied by this regularization scheme is 
$$\norm{\tens{X}}_{*,3,w} = \norm{(\sqrt[3]{q^{(1)}}\otimes\sqrt[3]{q^{(2)}}\otimes\sqrt[3]{q^{(3)}})\odot\tens{X}}_{*,3}$$
We justify this weighting only by analogy with the matrix case discussed by \citep{srebro_collaborative_2010}: to make the weighted nuclear $3$-norm of the all $1$ tensor independent of its dimensions for a uniform sampling (since the nuclear $3$-norm grows as $\sqrt[3]{MNP}$ for an $(M, N, P)$ tensor).

Comparatively, for the weighted version of the nuclear $2$-norm analyzed in \citet{yuan_tensor_2016}, the nuclear $2$-norm of the all $1$ tensor scales like $\sqrt{NMP}$. This would imply a formulation of the form
\begin{align}
\min_{(\facv{\dv}{\rkv})_{
\substack{\scriptscriptstyle \dv=1..3\\\scriptscriptstyle\rkv=1..\rk}}}
&\sum_{(i,j,k)\in\indices}
\loss_{i,j,k}\Big(\sum\limits_{\rkv=1}^\rk \!\facv{1}{\rkv} \otimes \facv{2}{\rkv}\otimes\facv{3}{\rkv}\Big) \nonumber \\
&+ \frac{\lambda}{3}\sum_{\rkv=1}^\rk \sum_{\dv=1}^3 \norm{\sqrt{q^{(d)}} \odot \facv{\dv}{\rkv}}_2^3\,.\label{eq:weighted_nuclear_two}
\end{align}

Contrary to formulation~\eqref{eq:weighted_nuclear_three}, the optimization of formulation~\eqref{eq:weighted_nuclear_two} with a minibatch SGD leads to an update of every coefficients for each mini-batch considered. Depending on the implementation, and size of the factors, there might be a large difference in speed between the updates of the weighted nuclear $\{2,3\}$-norm. In our implementation, this difference for \CP is of about $1.5\times$ in favor of the nuclear $3$-norm on \fb.

\section{A New \CP Objective}
Since our evaluation objective is to rank either the left-hand side or right-hand side of the predicates in our dataset, what we are trying to achieve is to model both predicates and their reciprocal. This suggests appending to our input the reciprocals of each predicates, thus factorizing $[\tens{Y};_2 \tilde{\tens{Y}}]$ rather than $\tens{Y}$, where $[~;_2~]$ is the mode-2 concatenation, and $\tens{Y}_{i,j,k} = \tilde{\tens{Y}}_{k,j,i}$. After that, we only need to model the object fibers of this new tensor $\tens{Y}$. We represent this transformation in Figure~\ref{fig:tensors}~(c). This reformulation has an important side-effect: it makes our algorithm invariant to the arbitrary choice of including a predicate or its reciprocal in the dataset. This property was introduced as "Semantic Invariance" in \citet{bailly_semantically_2015}. 
Another way of achieving this invariance property would be to find the flipping of predicates that lead to the smallest model. In the case of a CP decomposition, we would try to find the flipping that leads to lowest tensor rank. This seems hopeless, given the NP-hardness of computing the tensor rank.

More precisely, the instantaneous loss of a training triple $(i,j,k)$ becomes :
\begin{align}
\loss_{i,j,k}(\tens{\unkn}) = &-
\tensi{\unkn}{i,j,k} + \log \big ( \sum_{k'\!} \exp (\tensi{\unkn}{i,j,k'\!}) \big ) \label{eq:inv_loss} \\
& -\tensi{\unkn}{k,j+P,i} + \log \big ( \sum_{i'\!} \exp (\tensi{\unkn}{k,j+P,i'\!}) \big ). \nonumber
\end{align}
At test time we use $\tensi{\pred}{i,j,:}$ to rank possible right hand sides for query $(i, j, ?)$ and $\tensi{\pred}{k,j+P,:}$ to rank possible left hand sides for query $(?, j, k)$.

Using \CP to factor the tensor described in~\eqref{eq:inv_loss}, we beat the previous state of the art on many benchmarks, as shown in Table~\ref{tab:res}. This reformulation seems to help even the \complex decomposition, for which parameters are shared between the entity embeddings of the first and third mode.

\begin{table}[t]
\centering
\begin{tabular}{cccccc}
\toprule
Dataset & N & P & Train & Valid & Test \\
\wn & $41$k & $18$ & $141$k & $5$k & $5$k \\
\wnrr & $41$k & $11$ & $87$k & $3$k & $3$k \\
\fb & $15$k & 1k & $500$k & $50$k & $60$k \\
\fbd & $15$k & $237$ & $272$k & $18$k & $20$k \\
\yago & $123$k & $37$ & $1$M & $5$k & $5$k \\
\bottomrule
\end{tabular}
\caption{Dataset statistics.}
\label{tab:stats}
\end{table}

\section{Experiments}
We conducted all experiments on a Quadro GP 100 GPU. The code is available at \url{https://github.com/facebookresearch/kbc}. 

\begin{table*}[t]
\centering
\begin{tabular}{clcccccccccc}
\toprule

{} & Model &  \multicolumn{2}{c}{\wn} & \multicolumn{2}{c}{\wnrr} & \multicolumn{2}{c}{\fb} & \multicolumn{2}{c}{\fbd} & \multicolumn{2}{c}{\yago}\\
\midrule
{} & {}          &{\small  MRR}  &{\small H@10}  &{\small  MRR}  &{\small H@10}  &{\small  MRR}  &{\small H@10}  &{\small  MRR}  &{\small H@10}  &{\small  MRR}  &{\small H@10} \\

\multirow{4}{*}{\rotatebox[origin=c]{90}{Past SOTA}} & \CP
                      & $0.08$ & $0.13$ &  -  &  -  & $0.33$ & $0.53$   &  -  &  -  &   -    &   - \\
{} & \complex${}^\dagger$         & $0.94$ & $0.95$ &  $0.44$ & $0.51$  & $0.70$ & $0.84$ & $0.25$ & $0.43$ &  $0.36$ & $0.55$ \\
{} & \distmult$^\ast$  & $0.82$ & $0.94$ &  $0.43$ & $0.49$  & $0.80$ & $0.89$ & $0.24$ & $0.42$ &  $0.34$ & $0.54$ \\
{} & \conve$^\ast$     & $0.94$ & $0.95$ &  $0.46$ & $0.48$  & $0.75$ & $0.87$ & $0.32$ & $0.49$ &  $0.52$ & $0.66$ \\
{} & Best Published$^\star$    & $0.94$ & $\bm{0.97}$ & $0.46$ & $0.51$ & $0.84$ & $\bm{0.93}$ & $0.32$ & $0.49$ & $0.52$ & $0.66$ \\

\midrule

\multirow{4}{*}{\rotatebox[origin=c]{90}{Standard}} &&&&&&&&&&&\\
{} & \CP-N3        & $0.20$      & $0.33$      &  $0.12$ & $0.20$  & $0.46$ & $0.65$ & $0.33$ & $0.51$ &  $0.38$ & $0.65$ \\
{} & \complex-N3   & $\bm{0.95}$ & $\bm{0.96}$ &  $0.47$ & $0.54$  & $0.80$ & $0.89$ & $0.35$ & $0.54$ &  $0.49$ & $0.68$ \\
\\
\midrule

\multirow{4}{*}{\rotatebox[origin=c]{90}{Reciprocal}} & \CP-FRO
                  & $\bm{0.95}$  & $0.95$      &  $0.46$      & $0.48$      & $\bm{0.86}$  & $0.91$ & $0.34$       & $0.51$       &   $0.54$    & $0.68$ \\
{} & \CP-N3       & $\bm{0.95}$  & $0.96$ &  $0.47$      & $0.54$      & $\bm{0.86}$  & $0.91$ & $0.36$       & $0.54$       &   $0.57$    & $\bm{0.71}$ \\
{} & \complex-FRO & $\bm{0.95}$  & $0.96$ &  $0.47$      & $0.54$      & $\bm{0.86}$  & $0.91$ & $0.35$       & $0.53$       &   $0.57$    & $\bm{0.71}$\\
{} & \complex-N3  & $\bm{0.95}$  & $0.96$ &  $\bm{0.48}$ & $\bm{0.57}$ & $\bm{0.86}$  & $0.91$ & $\bm{0.37}$  &  $\bm{0.56}$ & $\bm{0.58}$ & $\bm{0.71}$ \\
\bottomrule
\end{tabular}
\caption{${}^\ast$Results taken as best from \citet{dettmers2017convolutional} and \citet{kadlec_knowledge_2017}. ${}^\dagger$Results taken as best from \citet{dettmers2017convolutional} and \citet{trouillon_complex_2016}.${}^\star$ We give the origin of each result on the Best Published row in appendix.}
\label{tab:res}
\end{table*}

\subsection{Datasets and Experimental Setup}
\wn and \fb are popular benchmarks in the Knowledge Base Completion community. The former comes from the WordNet database, was introduced in \citet{bordes_semantic_2014} and describes relations between words. The most frequent types of relations are highly hierarchical (e.g., hypernym, hyponym). The latter is a subsampling of Freebase limited to $15$k entities, introduced in \citet{bordes_translating_2013}. It contains predicates with different characteristics (e.g., one-to-one relations such as \emph{capital\_of} to many-to-many such as \emph{actor\_in\_film}).

\citet{toutanova_observed_2015} and \citet{dettmers2017convolutional} identified train to test leakage in both these datasets, in the form of test triplets, present in the train set for the reciprocal predicates. Thus, both of these authors created two modified datasets: \fbd and \wnrr. These datasets are harder to fit, so we expect regularization to have more impact. \citet{dettmers2017convolutional} also introduced the dataset \yago, which is larger in scale and doesn't suffer from leakage. All datasets statistics are shown in Table~\ref{tab:stats}.

In all our experiments, we distinguish two settings: Reciprocal, in which we use the loss described in equation~\eqref{eq:inv_loss} and Standard, which uses the loss in equation~\eqref{eq:fiberloss}. We compare our implementation of \CP and \complex with the best published results, then the different performances between the two settings, and finally, the contribution of the regularizer in the reciprocal setting. In the Reciprocal setting, we compare the weighted nuclear $3$-norm (N3) against the regularizer described in~\eqref{eq:mainprob} (FRO). In preliminary experiments, the weighted nuclear $2$-norm described in ~\eqref{eq:weighted_nuclear_two} did not seem to perform better than N3 and was slightly slower. We used Adagrad \citep{duchi_adaptive_2011} as our optimizer, whereas \citet{kadlec_knowledge_2017} favored Adam \citep{kingma2014adam}, because preliminary experiments didn't show improvements.

We ran the same grid for all algorithms and regularizers on the \fb, \fbd, \wn, \wnrr datasets, with a rank set to $2000$ for \complex, and $4000$ for \CP. Our grid consisted of two learning rates: $10^{-1}$ and $10^{-2}$, two batch-sizes: $25$ and $100$, and regularization coefficients in $[0, 10^{-3}, 5.10^{-3}, 10^{-2}, 5.10^{-2}, 10^{-1}, 5.10^{-1}]$. On \yago, we limited our models to rank $1000$ and used batch-sizes $500$ and $3000$, the rest of the grid was identical. We used the train/valid/test splits provided with these datasets and measured the filtered  Mean Reciprocal Rank (MRR) and Hits@10 (\citet{bordes_translating_2013}). We used the filtered MRR on the validation set for early stopping and report the corresponding test metrics. In this setting, an epoch for \complex with batch-size 100 on \fb took about $110s$ and $325s$ for a batch-size of $25$. We trained for $100$ epochs to ensure convergence, reported performances were reached within the first $25$ epochs.

All our results are reported in Table~\ref{tab:res} and will be discussed in the next subsections. Besides our implementations of \CP and \complex, we include the results of \conve and \distmult in the baselines. The former because \citet{dettmers2017convolutional} includes performances on the \wnrr and \yago benchmarks, the latter because of the good performances on \fb of \distmult and the extensive experiments on \wn and \fb reported in \citet{kadlec_knowledge_2017}. The performances of \distmult on \fbd, \wnrr and \yago may be slightly underestimated, since our baseline \CP results are better. To avoid clutter, we did not include in our table of results algorithms that make use of external data such as types \citep{krompas_s_type-constrained_2015},  external word embeddings \citep{toutanova_observed_2015}, or using path queries as regularizers \citep{guu_traversing_2015}. The published results corresponding to these methods are subsumed in the "Best Published" line of Table~\ref{tab:res}, which is taken, for every single metric and dataset, as the best published result we were able to find.

\subsection{Reimplementation of the Baselines}
The performances of our reimplementation of \CP and \complex appear in the middle rows of Table~\ref{tab:res} (Standard setting). We only kept the results for the nuclear $3$-norm, which didn't seem to differ from those with the Frobenius norm. Our results are slightly better than their published counterparts, going from $0.33$ to $0.46$ filtered MRR on \fb for \CP and $0.70$ to $0.80$ for \complex.  This might be explained in part by the fact that in the Standard setting \eqref{eq:mainprob} we use a multi-class log-loss, whereas \citet{trouillon_complex_2016} used binomial negative log-likelihood. Another reason for this increase can be the large rank of $2000$ that we chose, where previously published results used a rank of around $200$; the more extensive search for optimization/regularization parameters and the use of nuclear $3$-norm instead of the usual regularization are also most likely part of the explanation.

\subsection{Standard vs Reciprocal}
In this section, we compare the effect of reformulation~\eqref{eq:inv_loss}, that is, the middle and bottom rows of Table~\ref{tab:res}. The largest differences are obtained for \CP, which becomes a state of the art contender going from $0.2$ to $0.95$ filtered MRR on \wn, or from $0.46$ to $0.86$ filtered MRR on \fb.For \complex, we notice a weaker, but consistent improvement by using our reformulation, with the biggest improvements observed on \fb and \yago. 
Following the analysis in \citet{bordes_translating_2013}, we show in Table~\ref{tab:mrr_stats} the average filtered MRR as a function of the degree of the predicates. We compute the average in and out degrees on the training set, and separate the predicates in $4$ categories : 1-1, 1-m, m-1 and m-m, with a cut-off at $1.5$ on the average degree. We include reciprocal predicates in these statistics. That is, a predicate with an average in-degree of $1.2$ and average out-degree of $3.2$ will count as a 1-m when we predict its right-hand side, and as an m-1 when we predict its left-hand side. Most of our improvements come from the 1-m and m-m categories, both on \complex and \CP.
\begin{table}[t]
\centering
\begin{tabular}{lcccc}
\toprule
{} & \bf{\texttt{1-1}} & \bf{\texttt{m-1}} & \bf{\texttt{1-m}} & \bf{\texttt{m-m}} \\
\CP Standard          & $0.45$ & $0.71$ & $0.24$ & $0.44$\\
\CP Reciprocal        & $0.77$ & $\bm{0.92}$ & $\bm{0.71}$ & $0.86$\\
\complex Standard     & $0.87$ & $\bm{0.92}$ & $0.59$ & $0.81$ \\
\complex Reciprocal & $\bm{0.88}$ & $\bm{0.92}$ & $\bm{0.71}$ & $\bm{0.87}$\\
\bottomrule
\end{tabular}
\caption{Average MRR per relation type on \fb.}
\label{tab:mrr_stats}
\end{table}

\subsection{Frobenius vs Nuclear $3$}
We focus now on the effect of our norm-based N3 regularizer, compared to the Frobenius norm regularizer favored by the community. Comparing the four last rows of Table~\ref{tab:res}, we notice a small but consistent performance gain across datasets. The biggest improvements appear on the harder datasets \wnrr, \fbd and \yago. We checked on \wnrr the significance of that gain with a Signed Rank test on the rank pairs for \CP.

\subsection{Effect of Optimization Parameters}

During these experiments, we noticed a heavy influence of optimization hyper-parameters on final results. This influence can account for as much as $0.1$ filtered MRR and is illustrated in Figure~\ref{fig:optimization}.

\section{Conclusion and Discussion}
The main contribution of this paper is to isolate and systematically explore the effect of different factors for large-scale knowledge base completion. While the impact of optimization parameters was well known already, neither the effect of the formulation (adding reciprocals doubles the mean reciprocal rank on \fb for \CP) nor the impact of the regularization was properly assessed. The conclusion is that the \CP model performs nearly as well as the competitors when each model is evaluated in its optimal configuration. We believe this observation is important to assess and prioritize directions for further research on the topic. 

In addition, our proposal to use nuclear $p$-norm as regularizers with $p\neq 2$ for tensor factorization in general is of independent interest. 

The results we present leave several questions open. Notably, whereas we give definite evidence that \CP itself can perform extremely well on these datasets as long as the problem is formulated correctly, we do not have a strong theoretical justification as to why the differences in performances are so significant.

\begin{figure}
\centering
\includegraphics[scale=0.38]{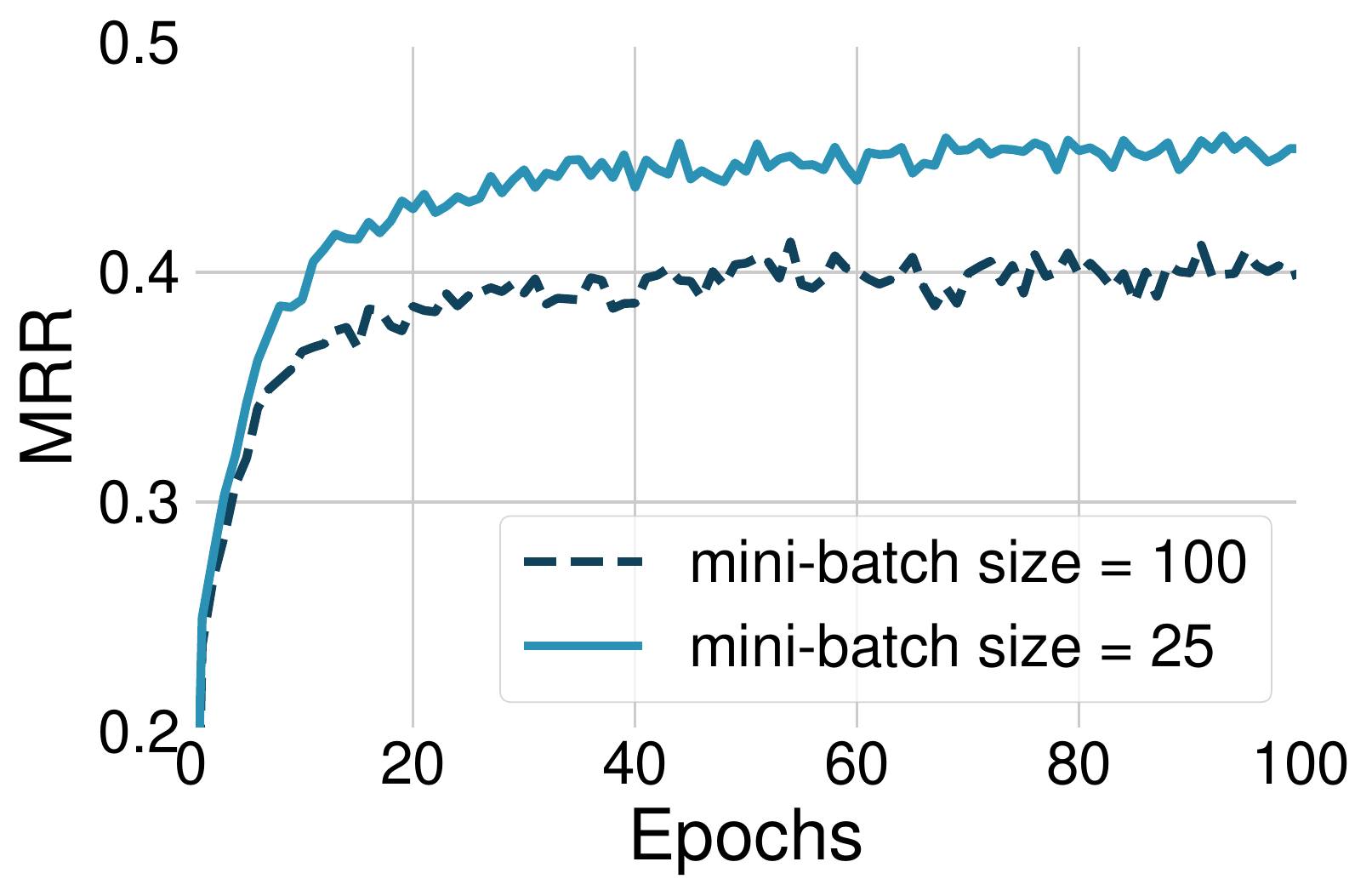}
~
\includegraphics[scale=0.38]{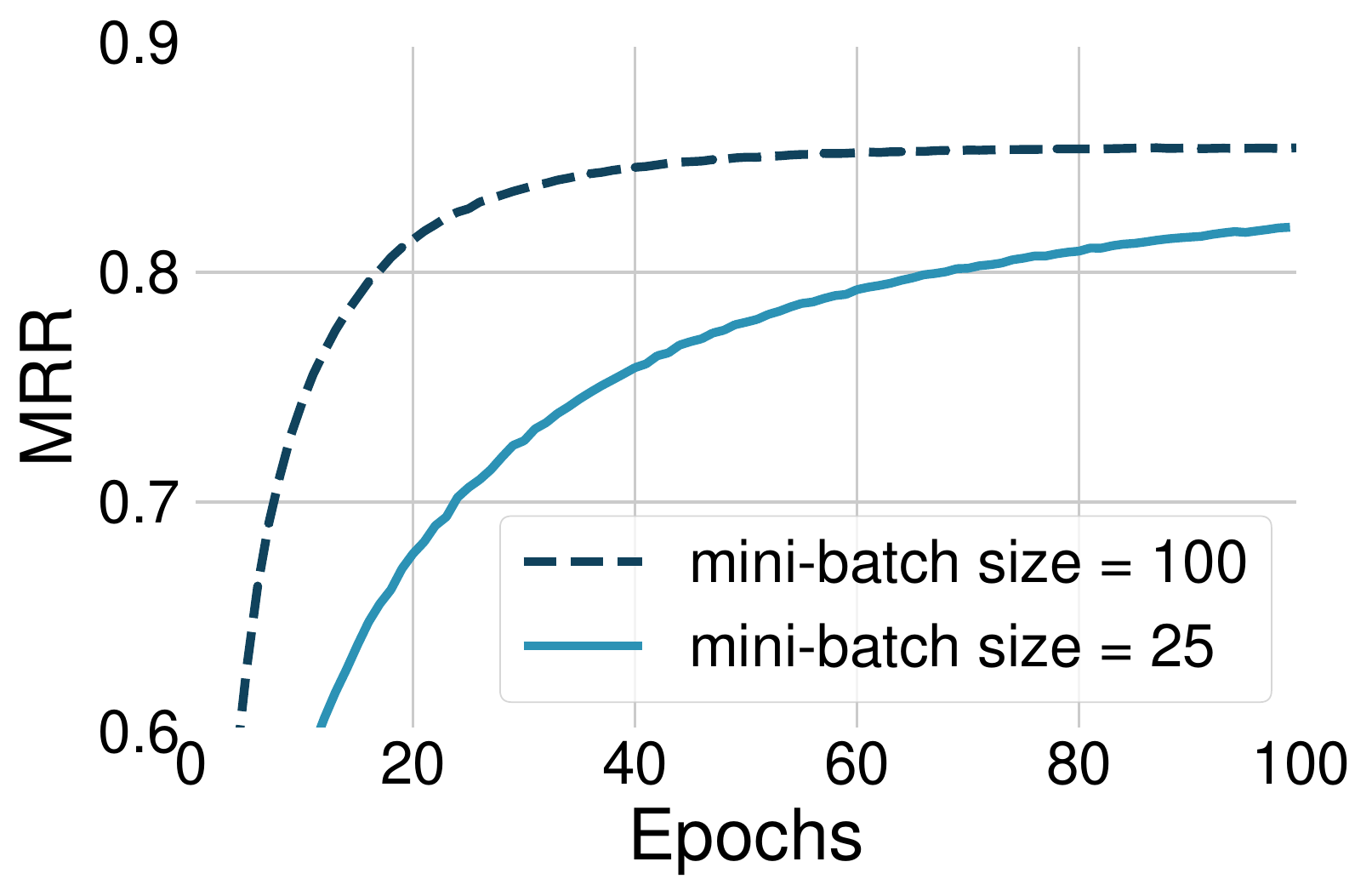}
\caption{Effect of the batch-size on \fb in the Standard (top) and Reciprocal (bottom) settings, other parameters being equal. The difference is large even after $100$ epochs and the effect is inverted in the two settings, making it hard to choose the batch-size a priori.
}
\label{fig:optimization}
\end{figure}

\section*{Acknowledgements}
The authors thank Armand Joulin and Maximilian Nickel for valuable discussions.

\bibliography{paper}
\bibliographystyle{icml2018}

\clearpage

\section{Appendix}
\subsection{DistMult on hierarchical predicates}
Suppose we are trying to embed a single hierarchical predicate $p$ which is a $n$-ary tree of depth $d$. This tree will have $n^d$ leaves, $1$ root and $K_n^d=\frac{n^d-n}{n-1}\sim n^{d-1}$ internal nodes. We forget any modeling issues we might have, but focus on the symmetricity assumption in {\tt Distmult}.

Leaves and the root only appear on one side of the queries $(i, p, j)$ and hence won't have any problems with the symmetricity. We now focus on an internal node $i$. It has $n$ children $(c^i_k)_{k=1..n}$ and one ancestor $a_i$. Assuming $n > 2$, the MRR associated with this node will be higher if the query $(i, p, ?)$ yields the ranked list $[c^i_1, ..., c^i_n, a_i]$. Indeed, the filtered rank of the n queries $(i, p, c^i_k)$ will be $1$ while the filtered rank of the query $(a_i,p, i)$ will be $n+1$.

Counting the number of queries for which the filtered rank is $1$, we see that they far outweigh the queries for which the filtered rank is $n+1$ in the final filtered MRR. For each internal nodes, $n$ queries lead to a rank of $1$, and only $1$ to a rank of $n+1$. For the root, $n$ queries with a rank of $1$, for the leaves, $n^d$ queries with a rank of $1$.

Our final filtered MRR is :
\begin{align}
mrr &= \frac{n^d + n +K_n^d\frac{1}{n+1}+K_n^d n}{n^d + n + (n+1)K_n^d} \\
&= 1 - \underbrace{K_n^d\frac{n/(n+1)}{n^d + n + (n+1)K_n^d}}_{\sim\frac{1}{2n}} \rightarrow 1
\end{align}

Hence for big hierarchies such as hyponym or hypernym in {\tt WN}, we expect the filtered MRR of {\tt DistMult} to be high even though its modeling assumptions are incorrect.

\subsection{Proofs}
\label{app:proof}
\begin{lemma}
\begin{equation}
\label{eq:lemma_tech}
\min_{u\in \allfaccp}\frac{1}{3}\sum_{r=1}^R\sum_{d=1}^3\norm{u_r^{(d)}}_p^{\alpha} = \min_{u \in \allfaccp}\sum_{r=1}^R\sqrt[3]{\prod_{d=1}^3\norm{u_r^{(d)}}_p^{\alpha}}.
\end{equation}
Moreover, the minimizers of the left-hand side satisfy: 
\begin{equation}
\norm{u_r^{(d)}}_p = \sqrt[3]{\prod_{d'=1}^3\norm{u_r^{(d')}}_p}.
\end{equation}
\end{lemma}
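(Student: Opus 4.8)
The plan is to exploit the \emph{rescaling invariance} of the canonical decomposition. For any index $r$ and any positive reals $\lambda_1,\lambda_2,\lambda_3$ with $\lambda_1\lambda_2\lambda_3=1$, replacing the block $u^{(d)}_r$ by $\lambda_d u^{(d)}_r$ for each $d$ leaves the rank-one term $u^{(1)}_r\otimes u^{(2)}_r\otimes u^{(3)}_r$ unchanged, hence leaves $\tens{\unkn}$ unchanged, so the rescaled decomposition still lies in $\allfaccp$. Consequently the product $\pi_r:=\prod_{d=1}^3\lVert u^{(d)}_r\rVert_p$ is invariant under such rescalings, and therefore the right-hand objective $\sum_{r}\prod_{d}\lVert u^{(d)}_r\rVert_p^{\alpha/3}=\sum_r \pi_r^{\alpha/3}$ is constant on each rescaling orbit, whereas the left-hand objective $\tfrac13\sum_r\sum_d\lVert u^{(d)}_r\rVert_p^{\alpha}$ is not. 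This is the mechanism that drives the identity.

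\emph{Lower bound for the left side.} First I would fix $u\in\allfaccp$ and, for each $r$, apply the arithmetic--geometric mean inequality to the three nonnegative numbers $\lVert u^{(d)}_r\rVert_p^{\alpha}$, $d=1,2,3$:
\[
\tfrac13\sum_{d=1}^3\lVert u^{(d)}_r\rVert_p^{\alpha}\ \ge\ \Big(\prod_{d=1}^3\lVert u^{(d)}_r\rVert_p^{\alpha}\Big)^{1/3}=\prod_{d=1}^3\lVert u^{(d)}_r\rVert_p^{\alpha/3},
\]
with equality (for $\alpha>0$) exactly when $\lVert u^{(1)}_r\rVert_p=\lVert u^{(2)}_r\rVert_p=\lVert u^{(3)}_r\rVert_p$, i.e. each equals $\sqrt[3]{\pi_r}$. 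Summing over $r$ shows the left objective dominates the right objective at every point of $\allfaccp$, hence the minimum of the left side is at least the minimum of the right side.

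\emph{Matching upper bound and the equality case.} For the reverse inequality, given any $u\in\allfaccp$ I would build $u'\in\allfaccp$ by rescaling each block $r$ with $\lambda_d=\sqrt[3]{\pi_r}/\lVert u^{(d)}_r\rVert_p$ when $\pi_r>0$ (which satisfies $\lambda_1\lambda_2\lambda_3=1$), and by zeroing out any block that has a vanishing factor (this only lowers both objectives and does not change $\tens{\unkn}$). By construction $u'$ has equal within-block factor norms, so AM--GM holds with equality at $u'$; thus the left objective at $u'$ equals the right objective at $u'$, which by rescaling invariance equals the right objective at $u$. Minimizing over $u$ yields the reverse inequality, hence the claimed equality. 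The ``moreover'' statement is then the equality case of AM--GM applied at an optimizer: if a minimizer $u$ of the left side had a block with unequal factor norms, the rescaling above would keep it in $\allfaccp$ while strictly decreasing the left objective, a contradiction; so $\lVert u^{(d)}_r\rVert_p=\sqrt[3]{\prod_{d'}\lVert u^{(d')}_r\rVert_p}$ for all $r,d$.

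\emph{What I expect to be delicate.} There is no deep obstacle here; the only care needed is bookkeeping. One must handle degenerate blocks, where the normalization $\lambda_d$ is undefined but the rank-one term is trivial, so that $\pi_r=0$ and the claimed relation reads $0=\sqrt[3]{0}$. One must also ensure the minima in the statement are attained; this is inherited through the normalization map $u\mapsto u'$, which carries decompositions achieving one minimum to decompositions achieving the other, so either both minima are attained (and the argument applies verbatim) or the identity holds with infima. I would flag these points but do not anticipate any substantive difficulty beyond making them precise.
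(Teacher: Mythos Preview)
Your proof is correct and is essentially the same argument as the paper's: both exploit the rescaling invariance of the decomposition (replacing $u^{(d)}_r$ by $c^d_r u^{(d)}_r$ with $\prod_d c^d_r=1$) and then optimize each summand separately. The paper phrases the per-block step as ``constrained optimization techniques'' yielding $c_d=\sqrt[3]{\prod_{d'}a_{d'}}/a_d$, which is exactly your AM--GM equality case; your write-up is in fact more explicit and handles the degenerate $\pi_r=0$ blocks and the attainment question, which the paper glosses over.
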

\begin{proof}
First, we characterize the minima :
\begin{align}
&\min\Big\{\Omega_p^\alpha(u)~\Big|~u\in\allfaccp\Big\} \\
=&\min\Big\{\Omega_p^\alpha(\tilde{u})~\Big|~\tilde{u}^d_r = c^d_r u^d_r, u\in\allfaccp, \prod_{d=1}^3 c^d_r=1\Big\} \\
=&\min\Big\{\frac{1}{3}\sum_{r=1}^R\sum_{d=1}^3(|c_r^d|\norm{u_r^{(d)}}_p)^\alpha~\Big|\\
&\quad\qquad\tilde{u}^d_r = c^d_r u^d_r, u\in\allfaccp, \prod_{d=1}^3 c^d_r=1\Big\} \\
\end{align}
We study a summand, for $c_i, a_i > 0$ :
$$\min_{\prod_{d=1}^3 c^d=1}\frac{1}{3}\sum_{d=1}^3(c_i a_i)^\alpha$$
Using constrained optimization techniques, we obtain that this minimum is obtained for :
$$c_i = \frac{\sqrt[3]{\prod_{d=1}^3a_i}}{a_i}$$
and has value $(\prod_{d=1}^3a_i)^{\alpha/3}$, which completes the proof.
\end{proof}

\begin{proposition}
The function over third order-tensors of  $\Re^{\nents_1 \times \nents_2\times\nents_3}$ defined as
$$\opnorm{\tens{\unkn}} = \min\Big\{ \norm{\sigma}_{2/3}~\Big|~(\sigma, \tilde{u}) \in \allfaccp, \rk\in\Na\Big\}$$
is not convex.
\end{proposition}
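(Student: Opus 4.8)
The plan is to prove non-convexity of $\opnorm{\cdot}$ by exhibiting two tensors whose midpoint leaves one of its sublevel sets. I would first record a cheap fact: $\opnorm{\cdot}$ is nonnegative and positively homogeneous of degree one, since scaling a tensor by $t>0$ scales the spectrum $\sigma$ of every $2$-normalized CP decomposition by $t$, so $\opnorm{t\tens X}=t\,\opnorm{\tens X}$. Hence it suffices to produce $A,B$ with $\opnorm{A}\le 1$, $\opnorm{B}\le 1$ and $\opnorm{A+B}>2$: then $\tfrac12(A+B)$ lies on the segment between $A$ and $B$, both in $\{\tens X:\opnorm{\tens X}\le 1\}$, yet $\opnorm{\tfrac12(A+B)}=\tfrac12\opnorm{A+B}>1$, so that sublevel set is not convex. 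I would take, in $\Re^{\nents_1\times\nents_2\times\nents_3}$ with each $\nents_d\ge 2$ and $e_1,e_2$ the first two canonical vectors in every mode, $A=e_1\otimes e_1\otimes e_1$, $B=e_2\otimes e_2\otimes e_2$, $C=A+B$. The one-term decompositions give $\opnorm{A}\le 1$ and $\opnorm{B}\le 1$, so everything reduces to the estimate $\opnorm{C}>2$.

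To bound $\opnorm{C}$ from below I would use two properties of $C$, both obtained from the injective norm $\norm{\tens Z}_\sigma:=\max\{\,|\langle \tens Z,x\otimes y\otimes z\rangle|:\twonorm{x}=\twonorm{y}=\twonorm{z}=1\,\}$, where $\langle\cdot,\cdot\rangle$ is the entrywise inner product: a short Cauchy--Schwarz argument gives $\norm{C}_\sigma=1$ (the bound being attained at $x=y=z=e_1$), while $\fnorm{C}^2=2$. Property (i): for any $2$-normalized decomposition $C=\sum_r\sigma_r\,\tilde u_r^{(1)}\otimes\tilde u_r^{(2)}\otimes\tilde u_r^{(3)}$ (discard null terms, so all $\sigma_r>0$), expanding $\langle C,C\rangle$ along the decomposition gives $2=\langle C,C\rangle\le\sum_r\sigma_r\norm{C}_\sigma=\norm{\sigma}_1$. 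Property (ii): $C$ lies at Frobenius distance at least $1$ from every rank-one tensor, because for a unit rank-one $u$ one has $\min_\lambda\fnorm{C-\lambda u}^2=\fnorm{C}^2-\langle C,u\rangle^2\ge 2-\norm{C}_\sigma^2=1$; applying this to the rank-one term indexed by $r_0$, and using $\fnorm{C-\sigma_{r_0}\tilde u_{r_0}^{(1)}\otimes\tilde u_{r_0}^{(2)}\otimes\tilde u_{r_0}^{(3)}}\le\sum_{r\neq r_0}\sigma_r=\norm{\sigma}_1-\sigma_{r_0}$, yields $\sigma_{r_0}\le\norm{\sigma}_1-1$ for every $r_0$.

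With $s:=\norm{\sigma}_1\ge 2$ and $m:=\max_r\sigma_r\le s-1$ in hand, I would finish by the elementary chain $\norm{\sigma}_{2/3}^{2/3}=\sum_r\sigma_r^{2/3}=\sum_r\sigma_r\,\sigma_r^{-1/3}\ge m^{-1/3}\sum_r\sigma_r=s\,m^{-1/3}\ge s(s-1)^{-1/3}$, combined with the observation that $s\mapsto s(s-1)^{-1/3}$ is increasing for $s>\tfrac32$ and so is at least its value $2$ at $s=2$. This gives $\norm{\sigma}_{2/3}\ge 2^{3/2}$ for every admissible $\sigma$, hence $\opnorm{C}\ge 2\sqrt2>2$ (indeed $\opnorm{C}=2\sqrt2$, attained by the orthogonal two-term decomposition $\sigma=(1,1)$). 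By the reduction above, $\opnorm{\cdot}$ is not convex.

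The main obstacle is exactly this last lower bound. Since $\norm{\cdot}_{*,2}$ (the $\ell_1$-analogue of the same infimum, which is a genuine norm by \citet{friedland_nuclear_2014}) agrees with $\opnorm{\cdot}$ on the rank-one tensors $A$ and $B$, the crude inequality $\opnorm{C}\ge\norm{C}_{*,2}=2$ is exactly tight and says nothing; one genuinely has to show that \emph{every} CP decomposition of $C$ spreads its spectrum over at least two entries of size $\Theta(1)$, so that the strictly subadditive $\ell_{2/3}$ quasi-norm of $\sigma$ is bounded away from $\norm{\sigma}_1$. Property (ii) is the device that makes this quantitative; obtaining a clean form of the estimate $\sigma_{r_0}\le\norm{\sigma}_1-1$ (or, alternatively, invoking Kruskal uniqueness to pin down the two-term decompositions of $C$ and treating the $R\ge 3$ case with this estimate) is the crux of the argument.
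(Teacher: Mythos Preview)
Your argument is correct, and it follows a genuinely different route from the paper's.

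The paper exploits a degenerate third mode: it works in $\Re^{2\times 2\times 1}$, so that every $2$-normalized CP decomposition of $\tfrac12 I_2$ is literally a matrix factorization $\tfrac12 I_2=\sum_r \pm\sigma_r\,u_r^{(1)}(u_r^{(2)})^\top$. A one-line trace/Cauchy--Schwarz bound gives $\|\sigma\|_1\ge\Tr(\tfrac12 I_2)=1$, and the matrix rank forces $\sigma$ to have at least two nonzero entries, whence $\|\sigma\|_{2/3}>\|\sigma\|_1\ge 1$. That is all.

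Your proof instead stays in the genuinely three-way setting ($N_d\ge 2$), replaces the trace bound by the injective-norm duality $\|\sigma\|_1\ge\fnorm{C}^2/\norm{C}_\sigma=2$, and, crucially, adds a second quantitative constraint: the Frobenius-distance estimate $\sigma_{r_0}\le\|\sigma\|_1-1$, valid for every term. This pair of bounds lets you close with the explicit inequality $\|\sigma\|_{2/3}\ge 2\sqrt{2}$, which is sharp. Compared with the paper, you trade brevity for (i) a quantitative, uniform lower bound that sidesteps any appeal to the minimum being attained, (ii) an argument that does not collapse a mode to dimension~$1$ and hence transfers verbatim to higher-order tensors, and (iii) the exact value of $\opnorm{C}$. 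The paper's approach, on the other hand, is considerably shorter once one accepts the $2\times2\times1$ reduction, because matrix rank immediately rules out $1$-sparse spectra.
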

\begin{proof}
We first study elements of $\Re^{2\times 2 \times 1}$, tensors of order $3$ associated with matrices of size $2\times 2$. We have that 
\begin{equation}
\opnorm{\begin{pmatrix}
1 & 0 \\
0 & 0
\end{pmatrix}} = \opnorm{\begin{pmatrix}
0 & 0 \\
0 & 1
\end{pmatrix}}= 1
\end{equation}
Let $A=\frac{1}{2}I_2$, the mean of these two matrices. Identifying $A$ with a $2\times 2\times 1$ tensor $\tens{A}$ to obtain the decomposition $(\sigma, u)$ yielding $\opnorm{\tens{A}}$, we have that the matrix $A$ can be written as $\sum_{r=1}^R \sigma_r u^{(1)}_r\otimes u^{(2)}_r$. This comes from the fact that $u^{(3)}_r$ is a normalized $1\times 1$ vector, so its only entry is equal to $1$. We then write that trace $\Tr(A) = \sum_{r=1}^R \sigma_r \Tr( u^{(1)}_r\otimes u^{(2)}_r) \leq \sum_{r=1}^R \sigma_r$ by Cauchy-Schwarz. Hence $\norm{\sigma}_1 \geq \Tr(A) = 1$. Moreover, we have $\norm{\sigma}_{2/3}\geq\norm{\sigma}_1$ with equality only for $\sigma$ with at most one non-zero coordinate. Since $A$ is of rank $2$, its representation has at least $2$ non-zero coordinates, hence $\opnorm{A} = \norm{\sigma}_{2/3} > 1$, which contradicts convexity. This proof can naturally be extended to tensors of any sizes.
\end{proof}

\newpage

\subsection{Best Published results}
We report in Table~\ref{tab:ref} the references for each of the results in Table~2 in the article.
\begin{table}[h]
\begin{tabular}{cccc}
\toprule
Model & Metric & Result & Reference \\
\midrule
\multirow{2}{*}{WN18} & MRR & $0.94$ & \citet{trouillon_complex_2016} \\
{}                    & H@10& $\bm{0.97}$ & \citet{ma2017transt} \\
\\
\multirow{2}{*}{WN18RR} & MRR & $0.46$ & \citet{dettmers2017convolutional} \\
{}                      & H@10& $\bm{0.51}$ & \citet{dettmers2017convolutional} \\
\\
\multirow{2}{*}{FB15K} & MRR & $0.84$ & \citet{kadlec_knowledge_2017} \\
{}                     & H@10& $\bm{0.93}$ & \citet{shen2016implicit} \\
\\
\multirow{2}{*}{FB15K-237} & MRR & $0.32$ & \citet{dettmers2017convolutional} \\
{}                         & H@10& $0.49$ & \citet{dettmers2017convolutional} \\
\\
\multirow{2}{*}{YAGO3-10} & MRR & $0.52$ & \citet{dettmers2017convolutional} \\
{}                        & H@10& $0.66$ & \citet{dettmers2017convolutional} \\

\bottomrule
\end{tabular}
\caption{References for the \emph{Best Published} row in Table~\ref{tab:res}}
\label{tab:ref}
\end{table}
\end{document}